\documentclass{article}
\pdfoutput=1
\usepackage{times}
\usepackage{afterpage}
\usepackage{graphicx}
\usepackage{subfigure}
\usepackage{natbib}
\usepackage{algorithm}
\usepackage{algorithmic}
\usepackage{hyperref}
\usepackage{amsmath,amssymb,amsthm}
\newtheorem{proposition}{Proposition}

\usepackage[accepted]{icml2015}
\icmltitlerunning{Weight Uncertainty in Neural Networks}

\newcommand{\compresslist}{%
        \setlength{\leftmargini}{0pt}
    \setlength{\itemsep}{1pt}%
        \setlength{\parskip}{0pt}%
        \setlength{\parsep}{0pt}%
    }

\begin{document} 

\twocolumn[
\icmltitle{Weight Uncertainty in Neural Networks}

\icmlauthor{Charles Blundell}{cblundell@google.com}
\icmlauthor{Julien Cornebise}{jucor@google.com}
\icmlauthor{Koray Kavukcuoglu}{korayk@google.com}
\icmlauthor{Daan Wierstra}{wierstra@google.com}
\icmladdress{Google DeepMind}
\icmlkeywords{deep learning, neural networks, Bayesian, probability}

\vskip 0.3in
]

\begin{abstract} 
We introduce a new, efficient, principled and backpropagation-compatible algorithm 
for learning a probability distribution on the weights of a
neural network, called \emph{Bayes by Backprop}.
It regularises the weights by minimising a compression cost, known as the variational free energy
or the expected lower bound on the marginal likelihood.
We show that this principled kind of regularisation yields comparable
performance to dropout on MNIST classification.
We then demonstrate how the learnt uncertainty in the weights can be used to
improve generalisation in non-linear regression problems, and how this weight
uncertainty can be used to drive the exploration-exploitation trade-off in
reinforcement learning.
\end{abstract} 

\section{Introduction}

Plain feedforward neural networks are prone to overfitting.
When applied to supervised or reinforcement learning problems these networks
are also often incapable of correctly assessing the uncertainty in the training
data and so make overly confident decisions about the correct class, prediction
or action.
We shall address both of these concerns by using variational Bayesian
learning to introduce uncertainty in the weights of the network.
We call our algorithm \emph{Bayes by Backprop}.
We suggest at least three motivations for introducing uncertainty on
the weights: 1) regularisation via a compression cost on the weights, 2) richer
representations and predictions from cheap model averaging, and 3) exploration
in simple reinforcement learning problems such as contextual bandits. 

Various regularisation schemes have been developed to prevent overfitting in neural networks
such as early stopping, weight decay, and dropout \citep{hinton_dropout_2012}.
In this work, we introduce an efficient, principled algorithm for regularisation built upon Bayesian inference on
the weights of the network \citep{mackay_practical_1992,buntine_bayesian_1991,mackay_probable_1995}.
This leads to a simple approximate learning
algorithm similar to backpropagation \citep{lecun_procedure_1985,
rumelhart1988learning}.
We shall demonstrate how this uncertainty can improve predictive
performance in regression problems by expressing uncertainty in regions with little or no data,
how this uncertainty can lead to more systematic exploration
than $\epsilon$-greedy in contextual bandit tasks.

All weights in our neural networks are represented by probability distributions
over possible values, rather than having a single fixed value as is the norm (see Figure~\ref{fig:schema}).
Learnt representations and computations must therefore be robust under
perturbation of the weights, but the amount of perturbation each weight exhibits is
also learnt in a way that coherently explains variability in the training data.
Thus instead of training a single network, the proposed method trains an ensemble of networks,
where each network has its weights drawn from a shared, learnt probability distribution.
Unlike other ensemble methods, our method typically only doubles the number of parameters
yet trains an infinite ensemble using unbiased Monte Carlo estimates of
the gradients.

In general, exact Bayesian inference on the weights of a neural network is intractable as
the number of parameters is very large and the functional form of a neural network does not lend itself
to exact integration.
Instead we take a variational approximation to exact Bayesian updates.
We build upon the work of \citet{graves_practical_2011}, who in turn built upon the work of \citet{hinton_keeping_1993}.
In contrast to this previous work, we show how the gradients of
\citet{graves_practical_2011} can be made unbiased and further how this method
can be used with non-Gaussian priors.
Consequently, Bayes by Backprop attains performance comparable to that of dropout \citep{hinton_dropout_2012}.
Our method is related to recent methods in deep, generative modelling
\citep{kingma_autoencoding_2014, rezende_stochastic_2014, gregor_deep_2014},
where variational inference has been applied to stochastic hidden units of an autoencoder.
Whilst the number of stochastic hidden units might be in the order of
thousands, the number of weights in a neural network is easily two orders of
magnitude larger, making the optimisation problem much larger scale.
Uncertainty in the hidden units allows the expression of uncertainty about a particular observation,
uncertainty in the weights is complementary in that it captures uncertainty
about which neural network is appropriate, leading to regularisation of the
weights and model averaging.

This uncertainty can be used to drive exploration in contextual bandit problems
using Thompson sampling \citep{thompson_likelihood_1933, chapelle_empirical_2011, agrawal_analysis_2012, may_optimistic_2012}.
Weights with greater uncertainty introduce more variability into the decisions made by the network,
leading naturally to exploration.
As more data are observed, the uncertainty can decrease, allowing the decisions made by the network
to become more deterministic as the environment is better understood.

\begin{figure}[t]
\begin{center}
\includegraphics[height=.15\textheight]{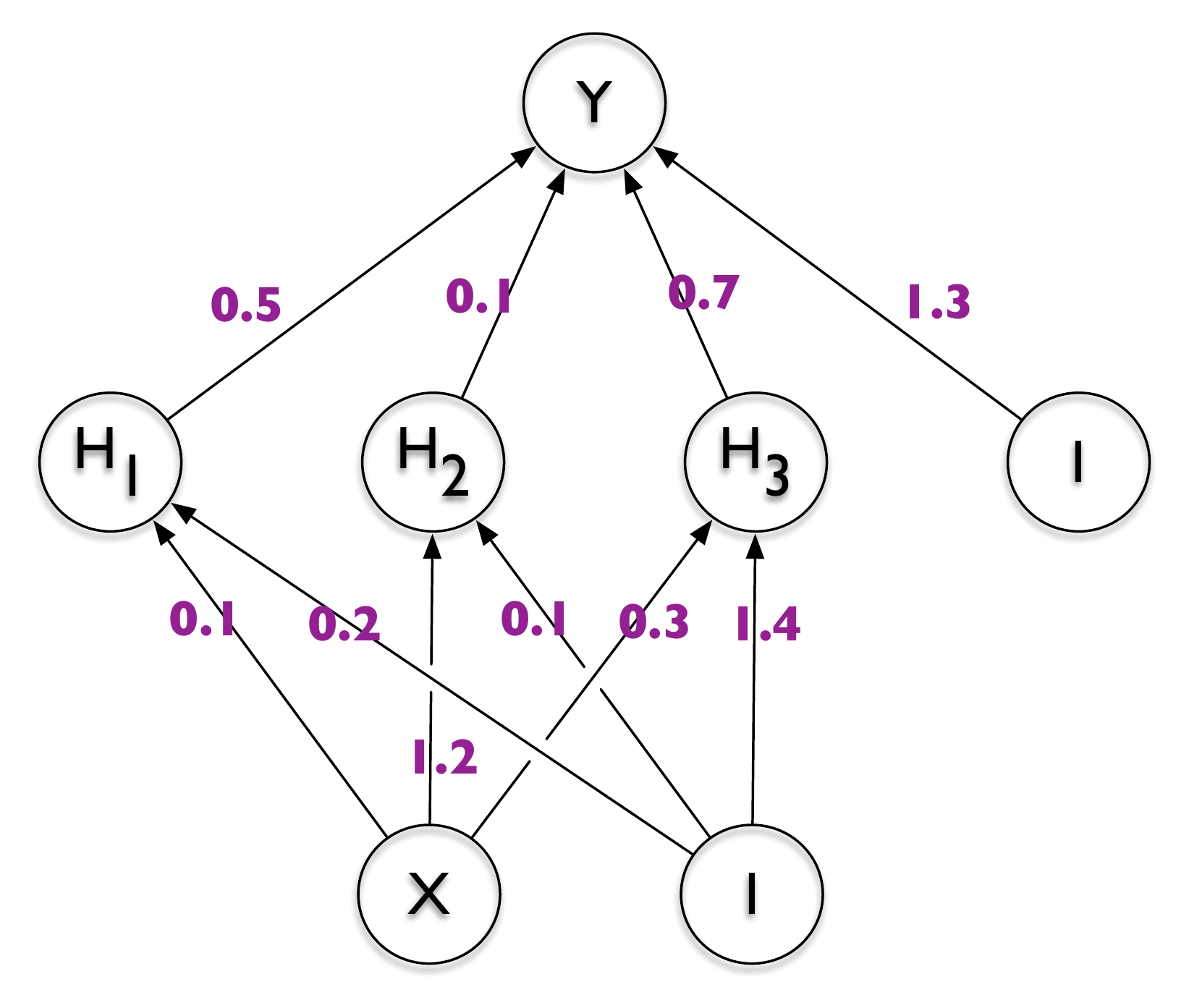}
\includegraphics[height=.15\textheight]{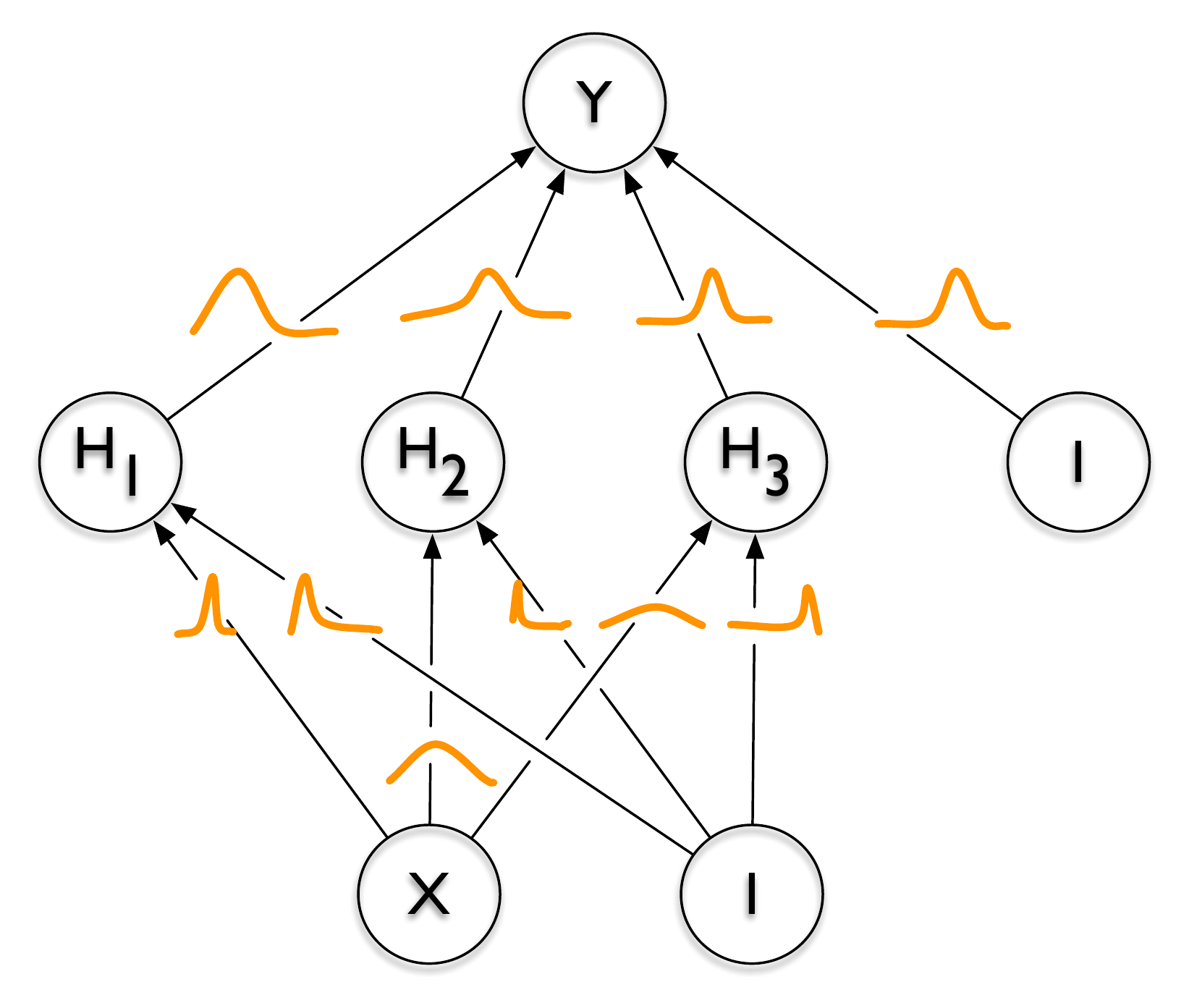}
\label{fig:schema}
\caption{Left: each weight has a fixed value, as provided by classical backpropagation. Right: each weight is assigned a distribution, as provided by Bayes by Backprop.}
\end{center}
\end{figure}

The remainder of the paper is organised as follows: Section~\ref{sec:point} introduces notation
and standard learning in neural networks, Section~\ref{sec:vb} describes variational Bayesian learning
for neural networks and our contributions, Section~\ref{sec:bandits} describes the application to contextual bandit problems,
whilst Section~\ref{sec:exper} contains empirical results on a classification, a regression and a bandit problem.
We conclude with a brief discussion in Section~\ref{sec:discuss}.

\section{Point Estimates of Neural Networks}
\label{sec:point}

We view a neural network as a probabilistic model $P(\mathbf{y}|
\mathbf{x}, \mathbf{w})$: given an input $\mathbf{x} \in \mathbb{R}^p$ a neural
network assigns a probability to each possible output $\mathbf{y} \in \mathcal{Y}$,
using the set of parameters or weights $\mathbf{w}$.
For classification, $\mathcal{Y}$ is a set of classes and
$P(\mathbf{y}|\mathbf{x}, \mathbf{w})$ is a categorical
distribution -- this corresponds to the cross-entropy or softmax loss,
when the parameters of the categorical distribution are passed through the
exponential function then re-normalised.
For regression $\mathcal{Y}$ is $\mathbb{R}$ and
$P(\mathbf{y}|\mathbf{x}, \mathbf{w})$ is a Gaussian distribution -- this
corresponds to a squared loss.

Inputs $\mathbf{x}$ are mapped onto the parameters of a distribution on
$\mathcal{Y}$ by several successive layers of linear transformation
(given by $\mathbf{w}$) interleaved with element-wise non-linear transforms.

The weights can be learnt by maximum likelihood estimation (MLE): given a set of training examples
$\mathcal{D} = (\mathbf{x}_i, \mathbf{y}_i)_i$, the MLE weights $\mathbf{w}^\text{MLE}$ are given by:
\begin{align*}
\mathbf{w}^\text{MLE}
&= \arg\max_\mathbf{w}
\log P(\mathcal{D}|\mathbf{w}) \\
&= \arg\max_\mathbf{w}
 \sum_i \log P(\mathbf{y}_i | \mathbf{x}_i, \mathbf{w})
.
\end{align*}
This is typically achieved by gradient descent (e.g., backpropagation), where
we assume that $\log P(\mathcal{D}|\mathbf{w})$ is differentiable in
$\mathbf{w}$.

Regularisation can be introduced by placing a prior upon the weights $\mathbf{w}$ and finding the
maximum a posteriori (MAP) weights $\mathbf{w}^\text{MAP}$:
\begin{align*}
\mathbf{w}^\text{MAP}
&= \arg\max_\mathbf{w}
\log P(\mathbf{w}|\mathcal{D}) \\
&= \arg\max_\mathbf{w}
\log P(\mathcal{D}|\mathbf{w}) + \log P(\mathbf{w})
.
\end{align*}
If $\mathbf{w}$ are given a Gaussian prior, this yields L2 regularisation (or weight decay).
If $\mathbf{w}$ are given a Laplace prior, then L1 regularisation is recovered.

\section{Being Bayesian by Backpropagation}
\label{sec:vb}

Bayesian inference for neural networks calculates the posterior distribution of the weights
given the training data, $P(\mathbf{w}|\mathcal{D})$.
This distribution answers predictive queries about unseen data by taking
expectations:
the predictive distribution of an unknown label $\hat{\mathbf{y}}$ of a test data item
$\hat{\mathbf{x}}$, is given by
$P(\hat{\mathbf{y}}|\hat{\mathbf{x}}) =
\mathbb{E}_{P(\mathbf{w}|\mathcal{D})}[P(\hat{\mathbf{y}}|\hat{\mathbf{x}},\mathbf{w})]$.
Each possible configuration of the weights, weighted according to the posterior
distribution, makes a prediction about the unknown label given the test data
item $\hat{\mathbf{x}}$.
Thus taking an expectation under the posterior distribution on weights is
equivalent to using an ensemble of an uncountably infinite number of neural
networks.
Unfortunately, this is intractable for neural networks of any practical size.

Previously \citet{hinton_keeping_1993} and \citet{graves_practical_2011}
suggested finding a variational approximation to the Bayesian posterior
distribution on the weights.
Variational learning finds the parameters $\theta$ of a distribution on the
weights $q(\mathbf{w}|\theta)$ that minimises the Kullback-Leibler (KL) divergence with the true
Bayesian posterior on the weights:
\begin{align*}
\theta^\star &= \arg\min_\theta
\text{KL}[q(\mathbf{w}|\theta)||P(\mathbf{w}|\mathcal{D})] \\
&= \arg\min_\theta
\int
q(\mathbf{w}|\theta)
\log
\frac{q(\mathbf{w}|\theta)}
     {P(\mathbf{w}) P(\mathcal{D}|\mathbf{w})}
\textrm{d}\mathbf{w} \\
&= \arg\min_\theta
\text{KL}
\left[
q(\mathbf{w}|\theta)
\mid \mid
P(\mathbf{w})
\right]
-
\mathbb{E}_{q(\mathbf{w}|\theta)}
\left[
\log
P(\mathcal{D}|\mathbf{w})
\right]
.
\end{align*}
The resulting cost function is variously known as the variational free energy
\citep{neal_view_1998,yedidia_generalized_2000,friston_variational_2007} or
the expected lower bound \citep{saul_mean_1996, neal_view_1998, jaakkola_bayesian_2000}.
For simplicity we shall denote it as
\begin{multline}
\label{eq:cost}
\mathcal{F}(\mathcal{D}, \theta) =
\text{KL}
\left[
q(\mathbf{w}|\theta)
\mid \mid
P(\mathbf{w})
\right] \\
-
\mathbb{E}_{q(\mathbf{w}|\theta)}
\left[
\log
P(\mathcal{D}|\mathbf{w})
\right]
.
\end{multline}
The cost function of \eqref{eq:cost} is a sum of a data-dependent part,
which we shall refer to as the likelihood cost, and a
prior-dependent part, which we shall refer to as the complexity cost.
The cost function embodies a trade-off between satisfying the complexity of the
data $\mathcal{D}$ and satisfying the simplicity prior $P(\mathbf{w})$.
\eqref{eq:cost} is also readily given an information theoretic interpretation
as a minimum description length cost \citep{hinton_keeping_1993,graves_practical_2011}.
Exactly minimising this cost na\"ively is computationally prohibitive.
Instead gradient descent and various approximations are used.

\subsection{Unbiased Monte Carlo gradients}
\label{sec:mc}

Under certain conditions, the derivative of an expectation can be
expressed as the expectation of a derivative:
\begin{proposition}
\label{prop:reparam}
Let $\mathbf{\epsilon}$ be a random variable having a probability density given by
$q(\mathbf{\epsilon})$ and let $\mathbf{w} = t(\theta, \mathbf{\epsilon})$
where $t(\theta,\mathbf{\epsilon})$ is a deterministic function.
Suppose further that the marginal probability density of $\mathbf{w}$,
$q(\mathbf{w}|\theta)$, is such that $q(\mathbf{\epsilon}) \textrm{d}\mathbf{\epsilon} = 
q(\mathbf{w}|\theta) \textrm{d}\mathbf{w}$. 
Then for a function $f$ with derivatives in $\mathbf{w}$:
\begin{align*}
\frac{\partial}{\partial \theta}
\mathbb{E}_{q(\mathbf{w}|\theta)}[f(\mathbf{w}, \theta)]
&=
\mathbb{E}_{q(\mathbf{\epsilon})}\left[
\frac{\partial f(\mathbf{w}, \theta)}{\partial \mathbf{w}}
\frac{\partial \mathbf{w}}{\partial \theta}
+
\frac{\partial f(\mathbf{w}, \theta)}{\partial \theta}
\right]
.
\end{align*}
\end{proposition}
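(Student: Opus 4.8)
The plan is to start from the definition of the expectation as an integral over $\mathbf{w}$, use the hypothesis $q(\mathbf{\epsilon})\,\mathrm{d}\mathbf{\epsilon} = q(\mathbf{w}|\theta)\,\mathrm{d}\mathbf{w}$ to rewrite it as an integral over the fixed measure $q(\mathbf{\epsilon})\,\mathrm{d}\mathbf{\epsilon}$ that does not depend on $\theta$, and then differentiate under the integral sign. Concretely, I would write
\[
\frac{\partial}{\partial \theta}\,\mathbb{E}_{q(\mathbf{w}|\theta)}[f(\mathbf{w},\theta)]
= \frac{\partial}{\partial \theta}\int f(\mathbf{w},\theta)\, q(\mathbf{w}|\theta)\,\mathrm{d}\mathbf{w}
= \frac{\partial}{\partial \theta}\int f(t(\theta,\mathbf{\epsilon}),\theta)\, q(\mathbf{\epsilon})\,\mathrm{d}\mathbf{\epsilon}.
\]
The crucial gain is that the only $\theta$-dependence now sits inside the integrand, so if differentiation and integration may be exchanged we get $\int \frac{\partial}{\partial\theta} f(t(\theta,\mathbf{\epsilon}),\theta)\, q(\mathbf{\epsilon})\,\mathrm{d}\mathbf{\epsilon}$.

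The second step is the chain rule applied pointwise. Since $\mathbf{w} = t(\theta,\mathbf{\epsilon})$ and $f$ has derivatives in $\mathbf{w}$ (and we assume the requisite smoothness in $\theta$), differentiating $f(t(\theta,\mathbf{\epsilon}),\theta)$ with respect to $\theta$ gives the total derivative
\[
\frac{\partial f(\mathbf{w},\theta)}{\partial \mathbf{w}}\frac{\partial \mathbf{w}}{\partial \theta}
+ \frac{\partial f(\mathbf{w},\theta)}{\partial \theta},
\]
where $\frac{\partial \mathbf{w}}{\partial \theta} = \frac{\partial t(\theta,\mathbf{\epsilon})}{\partial \theta}$. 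Substituting back and recognising the resulting integral against $q(\mathbf{\epsilon})\,\mathrm{d}\mathbf{\epsilon}$ as $\mathbb{E}_{q(\mathbf{\epsilon})}[\,\cdot\,]$ yields exactly the claimed identity. The first term is the "pathwise" contribution routed through the reparameterising map $t$; the second accounts for any explicit appearance of $\theta$ in $f$ (in our setting $f(\mathbf{w},\theta) = \log q(\mathbf{w}|\theta) - \log P(\mathbf{w})P(\mathcal{D}|\mathbf{w})$, so this term is nonzero and matters).

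The main obstacle — and the only place real analysis enters — is justifying the interchange of $\frac{\partial}{\partial\theta}$ and $\int\!\mathrm{d}\mathbf{\epsilon}$. This requires a dominated-convergence / differentiation-under-the-integral-sign argument: it suffices that $\theta \mapsto f(t(\theta,\mathbf{\epsilon}),\theta)$ be differentiable for $q$-almost every $\mathbf{\epsilon}$ and that its $\theta$-derivative be dominated, uniformly in $\theta$ near the point of interest, by some $q(\mathbf{\epsilon})$-integrable function. The proposition's phrasing ("under certain conditions", "$f$ with derivatives in $\mathbf{w}$") signals that these regularity conditions are being assumed rather than verified, so in the write-up I would state the domination/continuity hypothesis explicitly as the condition under which the exchange is licit, invoke the standard Leibniz rule, and then the rest is the one-line chain-rule computation above. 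A minor secondary point worth a remark: the hypothesis $q(\mathbf{\epsilon})\,\mathrm{d}\mathbf{\epsilon} = q(\mathbf{w}|\theta)\,\mathrm{d}\mathbf{w}$ is precisely the change-of-variables (pushforward) statement that makes the first equality valid, so no Jacobian factors survive — they are already absorbed into that identity — and I would note this so the reader sees why the formula has such a clean form.
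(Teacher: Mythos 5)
Your proposal is correct and follows essentially the same route as the paper's proof: rewrite the expectation as an integral over $\mathbf{w}$, use $q(\mathbf{\epsilon})\,\mathrm{d}\mathbf{\epsilon} = q(\mathbf{w}|\theta)\,\mathrm{d}\mathbf{w}$ to pass to an integral against the $\theta$-free measure $q(\mathbf{\epsilon})\,\mathrm{d}\mathbf{\epsilon}$, then differentiate under the integral sign with the chain rule and recognise the result as $\mathbb{E}_{q(\mathbf{\epsilon})}[\cdot]$. Your explicit discussion of the dominated-convergence hypothesis is a welcome addition; the paper leaves that interchange implicit under its ``under certain conditions'' caveat.
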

\begin{proof}
\begin{align*}
\frac{\partial}{\partial \theta}
\mathbb{E}_{q(\mathbf{w}|\theta)}[f(\mathbf{w},\theta)]
&=
\frac{\partial}{\partial \theta}
\int
f(\mathbf{w},\theta)
q(\mathbf{w}|\theta)
\textrm{d}\mathbf{w} \\
&=
\frac{\partial}{\partial \theta}
\int
f(\mathbf{w},\theta)
q(\epsilon)
\textrm{d}\epsilon \\
&=
\mathbb{E}_{q(\epsilon)}\left[
\frac{\partial f(\mathbf{w}, \theta)}{\partial \mathbf{w}}
\frac{\partial \mathbf{w}}{\partial \theta}
+
\frac{\partial f(\mathbf{w}, \theta)}{\partial \theta}
\right]
\end{align*}
\end{proof}
The deterministic function $t(\theta,\epsilon)$ transforms a sample of parameter-free
noise $\epsilon$ and the variational posterior parameters $\theta$ into a sample from 
the variational posterior.
Below we shall see how this transform works in practice for the Gaussian case.

We apply Proposition~\ref{prop:reparam} to the optimisation problem in
\eqref{eq:cost}: let $f(\mathbf{w}, \theta) = 
\log q(\mathbf{w}|\theta) - \log P(\mathbf{w}) P(\mathcal{D}|\mathbf{w})$.
Using Monte Carlo sampling to evaluate the expectations,
a backpropagation-like \citep{lecun_procedure_1985,rumelhart1988learning}
algorithm is obtained for variational Bayesian inference in neural networks --
Bayes by Backprop -- which uses unbiased estimates of gradients of the cost in
\eqref{eq:cost} to learn a distribution over the weights of a neural network.

Proposition~\ref{prop:reparam} is a generalisation of the Gaussian re-parameterisation trick
\citep{opper_variational_2009,kingma_autoencoding_2014,rezende_stochastic_2014} used for latent
variable models, applied to Bayesian learning of neural networks.
Our work differs from this previous work in several significant ways.
Bayes by Backprop operates on weights (of which there are a great many), whilst most previous work
applies this method to learning distributions on stochastic hidden units (of which there are far fewer than
the number of weights).
\citet{titsias2014doubly} considered a large-scale logistic regression task.
Unlike previous work, we do not use the closed form of the complexity cost (or
entropic part): not requiring a closed form of the complexity cost allows many
more combinations of prior and variational posterior families.
Indeed this scheme is also simple to implement and allows prior/posterior
combinations to be interchanged.
We approximate the exact cost \eqref{eq:cost} as:
\begin{multline}
\label{eq:approxcost}
\mathcal{F}(\mathcal{D}, \theta) \approx
\sum_{i=1}^n
\log q(\mathbf{w}^{(i)}|\theta)
-
\log P(\mathbf{w}^{(i)}) \\
-
\log
P(\mathcal{D}|\mathbf{w}^{(i)})
\end{multline}
where $\mathbf{w}^{(i)}$ denotes the $i$th Monte Carlo sample drawn from the variational
posterior $q(\mathbf{w}^{(i)}|\theta)$.
Note that every term of this approximate cost depends upon the \emph{particular} 
weights drawn from the variational posterior: this is an instance of a variance reduction
technique known as common random numbers \citep{mcbook}.
In previous work, where a closed form complexity cost or closed form entropy
term are used, part of the cost is sensitive to particular draws from
the posterior, whilst the closed form part is oblivious.
Since each additive term in the approximate cost in \eqref{eq:approxcost}
uses the same weight samples, the gradients of \eqref{eq:approxcost} are only
affected by the parts of the posterior distribution characterised by the weight
samples.
In practice, we did not find this to perform better than using a closed form KL
(where it could be computed), but we did not find it to perform worse.
In our experiments, we found that a prior without an easy-to-compute closed
form complexity cost performed best.

\subsection{Gaussian variational posterior}
Suppose that the variational posterior is a diagonal Gaussian distribution,
then a sample of the weights $\mathbf{w}$ can be obtained by
sampling a unit Gaussian, shifting it by a mean $\mu$ and scaling by a standard deviation
$\sigma$.
We parameterise the standard deviation pointwise as $\sigma = \log(1+\exp(\rho))$ and 
so $\sigma$ is always non-negative.
The variational posterior parameters are $\theta = (\mu, \rho)$.
Thus the transform from a sample of parameter-free noise and the variational posterior
parameters that yields a posterior sample of the weights $\textbf{w}$ is:
$\textbf{w} = t(\theta, \epsilon) = \mu + \log(1+\exp(\rho))\circ\epsilon$
where $\circ$ is pointwise multiplication.
Each step of optimisation proceeds as follows:
\begin{enumerate}
\compresslist%
\item Sample $\mathbf{\epsilon} \sim \mathcal{N}(0, I)$.
\item Let $\mathbf{w} = \mathbf{\mu} + \log(1+\exp(\rho))\circ\mathbf{\epsilon}$.
\item Let $\theta = (\mathbf{\mu}, \mathbf{\rho})$.
\item Let $f(\mathbf{w}, \theta) = \log q(\mathbf{w}|\theta) - \log P(\mathbf{w})P(\mathcal{D}|\mathbf{w})$.
\item Calculate the gradient with respect to the mean
\begin{align}
\Delta_\mu &=
\frac{\partial f(\mathbf{w}, \theta)}{\partial \mathbf{w}}
+
\frac{\partial f(\mathbf{w}, \theta)}{\partial \mu}
.
\end{align}
\item Calculate the gradient with respect to the standard deviation parameter $\rho$
\begin{align}
\Delta_\rho &=
\frac{\partial f(\mathbf{w}, \theta)}{\partial \mathbf{w}}
\frac{\mathbf{\epsilon}}
     {1 + \exp(-\rho)}
+
\frac{\partial f(\mathbf{w}, \theta)}{\partial \rho}
.
\end{align}
\item Update the variational parameters:
\begin{align}
\mathbf{\mu} &\leftarrow \mathbf{\mu} - \alpha\Delta_\mu
\\
\mathbf{\rho} &\leftarrow \mathbf{\rho} - \alpha\Delta_\rho
.
\end{align}
\end{enumerate}
Note that the
$\frac{\partial f(\mathbf{w}, \theta)}{\partial \mathbf{w}}$
term of the gradients for the mean and standard deviation are shared and
are exactly the gradients found by the usual backpropagation algorithm
on a neural network.
Thus, remarkably, to learn both the mean and the standard deviation we must
simply calculate the usual gradients found by backpropagation, and then scale
and shift them as above.

\subsection{Scale mixture prior}

Having liberated our algorithm from the confines of Gaussian priors and posteriors,
we propose a simple scale mixture prior combined with a diagonal Gaussian posterior.
The diagonal Gaussian posterior is largely free from numerical issues,
and two degrees of freedom per weight only increases the number of parameters
to optimise by a factor of two, whilst giving each weight its own quantity of
uncertainty.

We pick a fixed-form prior and do not adjust its hyperparameters during training,
instead picking the them by cross-validation where possible.
Empirically we found optimising the parameters of a prior $P(\textbf{w})$ (by
taking derivatives of \eqref{eq:cost}) to not be useful, and yield worse
results.
\citet{graves_practical_2011} and \citet{titsias2014doubly} propose closed form
updates of the prior hyperparameters.
Changing the prior based upon the data that it is meant to regularise is known
as empirical Bayes and there is much debate as to its validity
\citep{gelman_objections_2008}.
A reason why it fails for Bayes by Backprop is as follows: it can be easier to
change the prior parameters (of which there are few) than it is to change the
posterior parameters (of which there are many) and so very quickly the prior
parameters try to capture the empirical distribution of the weights at the
beginning of learning.
Thus the prior learns to fit poor initial parameters quickly, and makes the cost in
\eqref{eq:cost} less willing to move away from poor initial parameters.
This can yield slow convergence, introduce strange local minima and result in
poor performance.

We propose using a scale mixture of two Gaussian densities as the prior.
Each density is zero mean, but differing variances:
\begin{align}
P(\mathbf{w})
&= \prod_j \pi \mathcal{N}(\mathbf{w}_j|0, \sigma^2_1) + (1-\pi) \mathcal{N}(\mathbf{w}_j|0, \sigma^2_2)
,
\end{align}
where $\mathbf{w}_j$ is the $j$th weight of the network, $\mathcal{N}(x|\mu,\sigma^2)$ is the Gaussian
density evaluated at $x$ with mean $\mu$ and variance $\sigma^2$ and
$\sigma_1^2$ and $\sigma_2^2$ are the variances of the mixture components.
The first mixture component of the prior is given a larger variance
than the second, $\sigma_1 > \sigma_2$, providing a heavier tail in the
prior density than a plain Gaussian prior.
The second mixture component has a small variance $\sigma_2 \ll 1$
causing many of the weights to \emph{a priori} tightly concentrate around zero.
Our prior resembles a spike-and-slab prior
\citep{mitchell1988bayesian,george1993variable,chipman1996bayesian}, where
instead all the prior parameters are shared among all the weights.
This makes the prior more amenable to use during optimisation by stochastic
gradient descent and avoids the need for prior parameter optimisation based upon
training data.

\subsection{Minibatches and KL re-weighting}
As several authors have noted, the cost in \eqref{eq:cost} is amenable to minibatch optimisation, 
often used with neural networks:
for each epoch of optimisation the training data $\mathcal{D}$ is
randomly split into a partition of $M$ equally-sized subsets, $\mathcal{D}_1, \mathcal{D}_2, \dots, \mathcal{D}_M$.
Each gradient is averaged over all elements in one of these minibatches; a trade-off between
a fully batched gradient descent and a fully stochastic gradient descent.
\citet{graves_practical_2011} proposes minimising the minibatch cost for minibatch $i=1,2,\dots,M$:
\begin{multline}
\mathcal{F}^\text{EQ}_i(\mathcal{D}_i, \theta) =
\frac{1}{M}
\text{KL}
\left[
q(\mathbf{w}|\theta)
\mid \mid
P(\mathbf{w})
\right] \\
-
\mathbb{E}_{q(\mathbf{w}|\theta)}
\left[
\log
P(\mathcal{D}_i|\mathbf{w})
\right]
.
\end{multline}
This is equivalent to the cost in \eqref{eq:cost} since $\sum_i
\mathcal{F}^\text{EQ}_i(\mathcal{D}_i, \theta) = \mathcal{F}(\mathcal{D},
\theta)$.
There are many ways to weight the complexity cost relative to the likelihood cost on
each minibatch.
For example, if minibatches are partitioned uniformly at random, the KL cost can be
distributed non-uniformly among the minibatches at each epoch.
Let $\mathbf{\pi}\in[0,1]^M$ and $\sum_{i=1}^M \pi_i = 1$, and define:
\begin{multline}
\mathcal{F}^\pi_i(\mathcal{D}_i, \theta) =
\pi_i
\text{KL}
\left[
q(\mathbf{w}|\theta)
\mid \mid
P(\mathbf{w})
\right] \\
-
\mathbb{E}_{q(\mathbf{w}|\theta)}
\left[
\log
P(\mathcal{D}_i|\mathbf{w})
\right]
\end{multline}
Then $\mathbb{E}_M[\sum_{i=1}^M \mathcal{F}^\pi_i(\mathcal{D}_i, \theta)] = \mathcal{F}(\mathcal{D},\theta)$
where $\mathbb{E}_M$ denotes an expectation over the random partitioning of minibatches.
In particular, we found the scheme $\pi_i = \frac{2^{M-i}}{2^M-1}$ to work well:
the first few minibatches are heavily influenced by the
complexity cost, whilst the later minibatches are largely influenced by the
data.
At the beginning of learning this is particularly useful as for the first few minibatches
changes in the weights due to the data are slight and as more data are seen, data become
more influential and the prior less influential.

\section{Contextual Bandits}
\label{sec:bandits}

Contextual bandits are simple reinforcement learning problems without
persistent state \citep{li_contextual_bandit_2010, filippi_parametric_2010}.
At each step an agent is presented with a context $x$ and a choice of one of
$K$ possible actions $a$.
Different actions yield different unknown rewards $r$.
The agent must pick the action that yields the highest expected reward.
The context is assumed to be presented independent of any previous actions,
rewards or contexts.

An agent builds a model of the distribution of the rewards
conditioned upon the action and the context: $P(r|x,a, \textbf{w})$.
It then uses this model to pick its action.
Note, importantly, that an agent does not know what reward it could have received for an
action that it did not pick, a difficulty often known as ``the absence of counterfactual''.
As the agent's model $P(r|x,a,\textbf{w})$ is trained online, based upon the
actions chosen, unless exploratory actions are taken, the agent may perform
suboptimally.

\subsection{Thompson Sampling for Neural Networks}

As in Section~\ref{sec:point}, $P(r | x, a, \mathbf{w})$ can be modelled by a neural
network where $\mathbf{w}$ are the weights of the neural network.
However if this network is simply fit to observations and the action with the
highest expected reward taken at each time, the agent can under-explore, as it
may miss more rewarding actions.\footnote{
Interestingly, depending upon how $\mathbf{w}$ are initialised and the mean of
prior used during MAP inference, it is sometimes possible to obtain
another heuristic for the exploration-exploitation trade-off: optimism-under-uncertainty.
We leave this for future investigation.}

Thompson sampling \citep{thompson_likelihood_1933} is a popular means of picking an action that trades-off
between exploitation (picking the best known action) and exploration (picking
what might be a suboptimal arm to learn more).
Thompson sampling usually necessitates a Bayesian treatment of the model
parameters.
At each step, Thompson sampling draws a new set of parameters
and then picks the action relative to those parameters.
This can be seen as a kind of stochastic hypothesis testing: more
probable parameters are drawn more often and thus refuted or confirmed the
fastest.
More concretely Thompson sampling proceeds as follows:
\begin{enumerate}
\compresslist%
\item Sample a new set of parameters for the model.
\item Pick the action with the highest expected reward according to the sampled parameters.
\item Update the model. Go to 1.
\end{enumerate}
There is an increasing literature concerning the efficacy and justification of this means of
exploration \citep{chapelle_empirical_2011,may_optimistic_2012,kaufmann_thompson_2012,agrawal_analysis_2012,agrawal_further_2013}.
Thompson sampling is easily adapted to neural networks using the variational posterior
found in Section~\ref{sec:vb}:
\begin{enumerate}
\compresslist%
\item Sample weights from the variational posterior: $\mathbf{w} \sim q(\mathbf{w}|\theta)$.
\item Receive the context $x$.
\item Pick the action $a$ that minimises $\mathbb{E}_{P(r|x,a,\mathbf{w})}[r]$
\item Receive reward $r$.
\item Update variational parameters $\theta$ according to Section~\ref{sec:vb}. Go to 1.
\end{enumerate}
Note that it is possible, as mentioned in Section~\ref{sec:mc}, to decrease the variance of the 
gradient estimates, trading off for reduced exploration, by using more than one Monte Carlo sample,
using the corresponding networks as an ensemble and picking the action by minimising the average of 
the expectations.

Initially the variational posterior will be close to the prior, and actions will be picked
uniformly.
As the agent takes actions, the variational posterior will begin to converge,
and uncertainty on many parameters can decrease, and so action selection will
become more deterministic, focusing on the high expected reward actions
discovered so far.
It is known that variational methods under-estimate uncertainty
\citep{minka_family_2001,minka_divergence_2005,bishop_section_2006}
which could lead to under-exploration and premature convergence in practice,
but we did not find this in practice.

\section{Experiments}
\label{sec:exper}

We present some empirical evaluation of the methods proposed above: on MNIST classification,
on a non-linear regression task, and on a contextual bandits task.

\begin{table}
\small
\caption{\label{tab:mnist}Classification Error Rates on MNIST. $\star$ indicates result used
an ensemble of $5$ networks.}
\centering
\begin{tabular}{l@{ }|l@{ }|l@{ }|r}
    \rotatebox{0}{\textbf{Method}} & \rotatebox{90}{\textbf{\# Units/Layer}} & 
\rotatebox{90}{\textbf{\# Weights}} & \begin{tabular}{@{}c@{}}\textbf{Test}\\\textbf{Error}\end{tabular} \\
\hline
SGD, no regularisation {\tiny \citep{simard_best_2003}}  & 800 & 1.3m & $1.6\%$ \\
SGD, dropout {\tiny \citep{hinton_dropout_2012}} &      &       & {\tiny $\approx$} $1.3\%$ \\
SGD, dropconnect {\tiny \citep{wan2013regularization}} & 800 & 1.3m & $\mathbf{1.2\%}^\star$ \\
\hline
SGD & 400 & 500k & $1.83\%$ \\
 & 800 & 1.3m & $1.84\%$ \\
 & 1200 & 2.4m & $1.88\%$ \\
\hline
SGD, dropout & 400 & 500k & $1.51\%$ \\
 & 800 & 1.3m & $1.33\%$ \\
 & 1200 & 2.4m & $1.36\%$ \\
\hline
Bayes by Backprop, Gaussian & 400 & 500k & $1.82\%$ \\
 & 800 & 1.3m & $1.99\%$ \\
 & 1200 & 2.4m & $2.04\%$ \\
\hline
Bayes by Backprop, Scale mixture & 400 & 500k & $1.36\%$ \\
 & 800 & 1.3m & $1.34\%$ \\
 & 1200 & 2.4m & $\mathbf{1.32\%}$ \\
\end{tabular}
\end{table}

\subsection{Classification on MNIST}
\label{sec:exper:mnist}
We trained networks of various sizes on the MNIST digits dataset \citep{lecun_mnist_1998},
consisting of 60,000 training and 10,000 testing pixel images of size 28 by 28.
Each image is labelled with its corresponding number (between zero
and nine, inclusive).
We preprocessed the pixels by dividing values by $126$.
Many methods have been proposed to improve results on MNIST: generative pre-training,
convolutions, distortions, etc.
Here we shall focus on improving the performance of an ordinary feedforward neural network
\emph{without} using any of these methods.
We used a network of two hidden layers of rectified linear units \citep{nair_rectified_2010,glorot_deep_2011}, and
a softmax output layer with 10 units, one for each possible label.

According to \citet{hinton_dropout_2012}, the best published feedforward neural network
classification result on MNIST (excluding those using data set augmentation,
convolutions, etc.) is $1.6\%$ \citep{simard_best_2003}, whilst dropout with an
L2 regulariser attains errors around $1.3\%$.
Results from Bayes by Backprop are shown in Table~\ref{tab:mnist}, for various sized networks,
using either a Gaussian or Gaussian scale mixture prior.
Performance is comparable to that of dropout, perhaps slightly better, as also see on Figure~\ref{fig:mnist}.
Note that we trained on 50,000 digits and used 10,000 digits as a validation set,
whilst \citet{hinton_dropout_2012} trained on 60,000 digits and did not use a validation set.
We used the validation set to pick the best hyperparameters (learning rate, number of gradients to average)
and so we also repeated this protocol for dropout and SGD (Stochastic Gradient Descent on the MLE objective in
Section~\ref{sec:point}).
We considered learning rates of $10^{-3}$, $10^{-4}$ and $10^{-5}$ with minibatches of size 128.
For Bayes by Backprop, we averaged over either 1, 2, 5, or 10 samples and
considered $\pi \in \{\frac{1}{4}, \frac{1}{2}, \frac{3}{4}\}$, $-\log \sigma_1 \in \{0, 1, 2\}$ and $-\log \sigma_2 \in \{6, 7, 8\}$.

\begin{figure}
\begin{center}
\includegraphics[width=0.9\columnwidth]{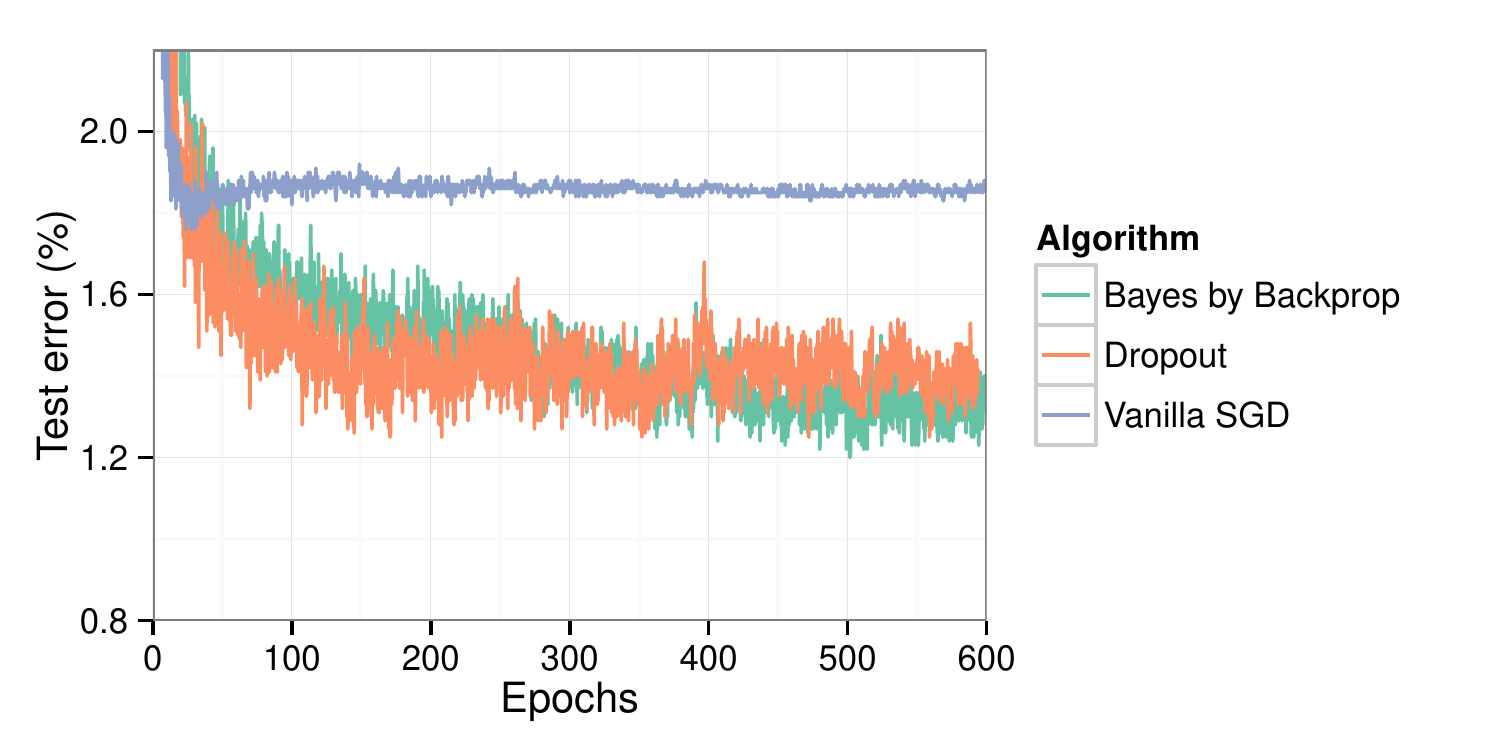}
\caption{Test error on MNIST as training progresses.}
\label{fig:mnist}
\end{center}
\end{figure}

\begin{figure}
\begin{center}
\includegraphics[width=0.9\columnwidth]{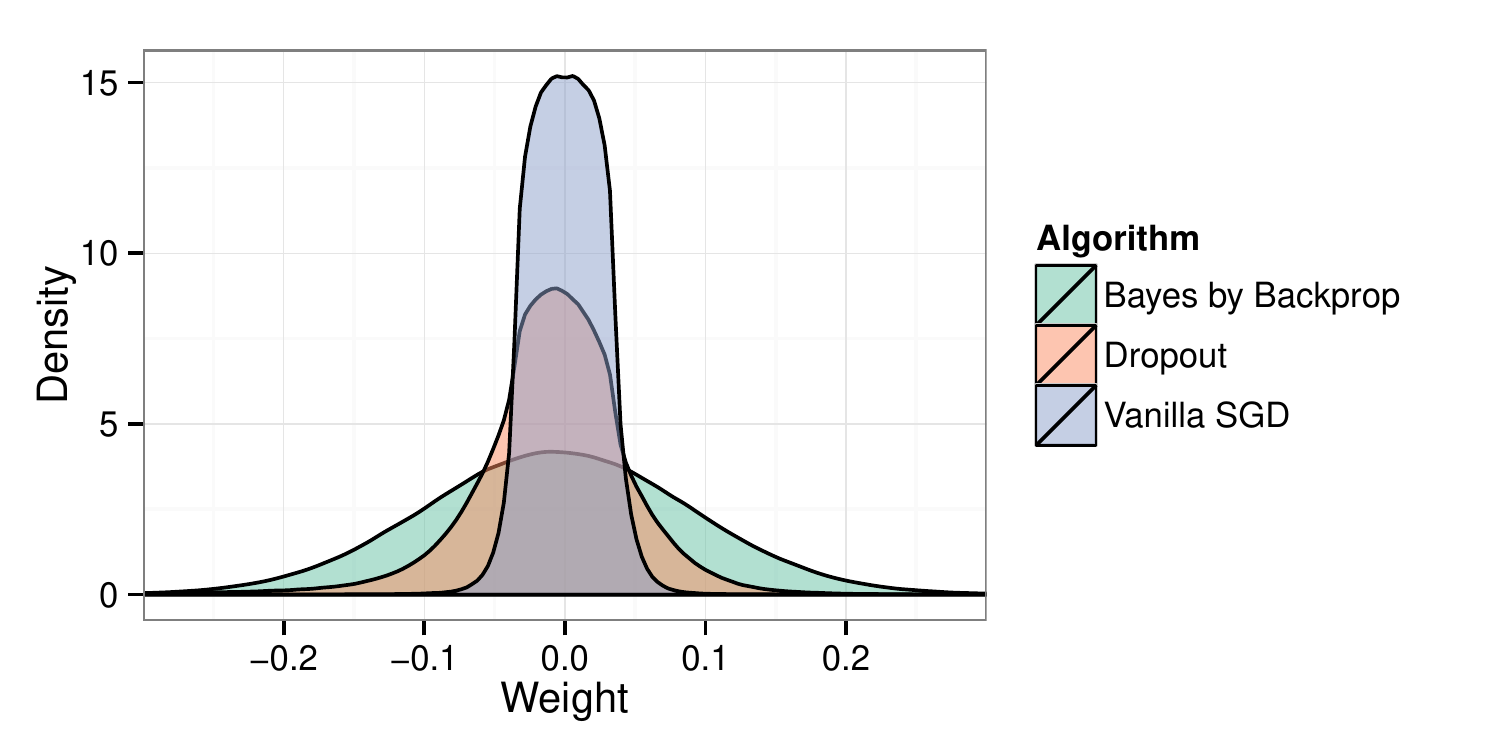}
\caption{Histogram of the trained weights of the neural network, for Dropout, plain SGD, and samples from Bayes by Backprop.}
\label{fig:histogram}
\end{center}
\end{figure}

Figure~\ref{fig:mnist} shows the learning curves on the test set for Bayes by
Backprop, dropout and SGD on a network with two layers of 1200 rectified linear
units.
As can be seen, SGD converges the quickest, initially obtaining a low test
error and then overfitting.
Bayes by Backprop and dropout converge at similar rates (although each
iteration of Bayes by Backprop is more expensive than dropout -- around two
times slower).
Eventually Bayes by Backprop converges on a better test error than dropout after
600 epochs.

Figure~\ref{fig:histogram} shows density estimates of the weights.  The Bayes
by Backprop weights are sampled from the variational posterior, and the dropout
weights are those used at test time.
Interestingly the regularised networks found by dropout and Bayes by Backprop
have a greater range and with fewer centred at zero than those found by SGD.
Bayes by Backprop uses the greatest range of weights.

\begin{figure}[h]
\begin{center}
\includegraphics[width=1\columnwidth]{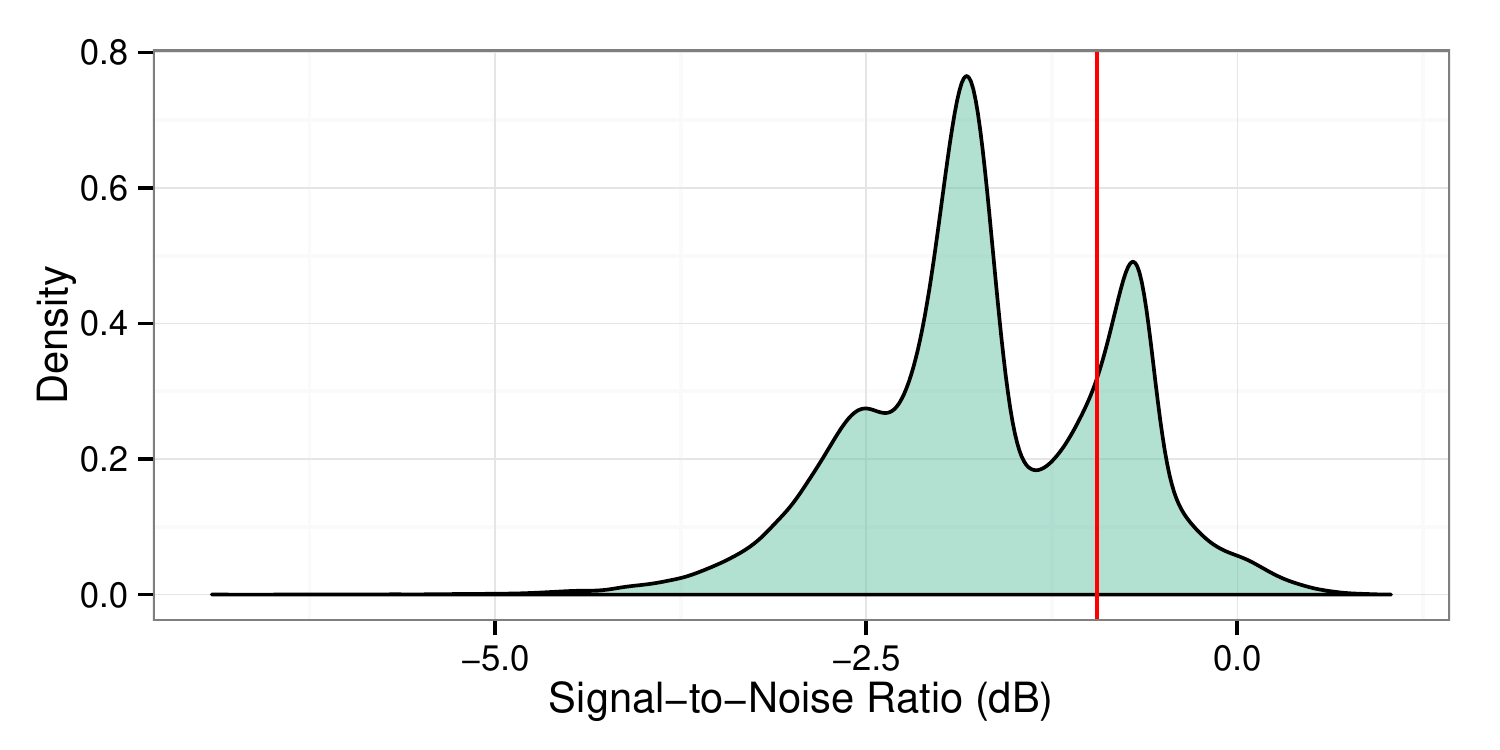}
\includegraphics[width=1\columnwidth]{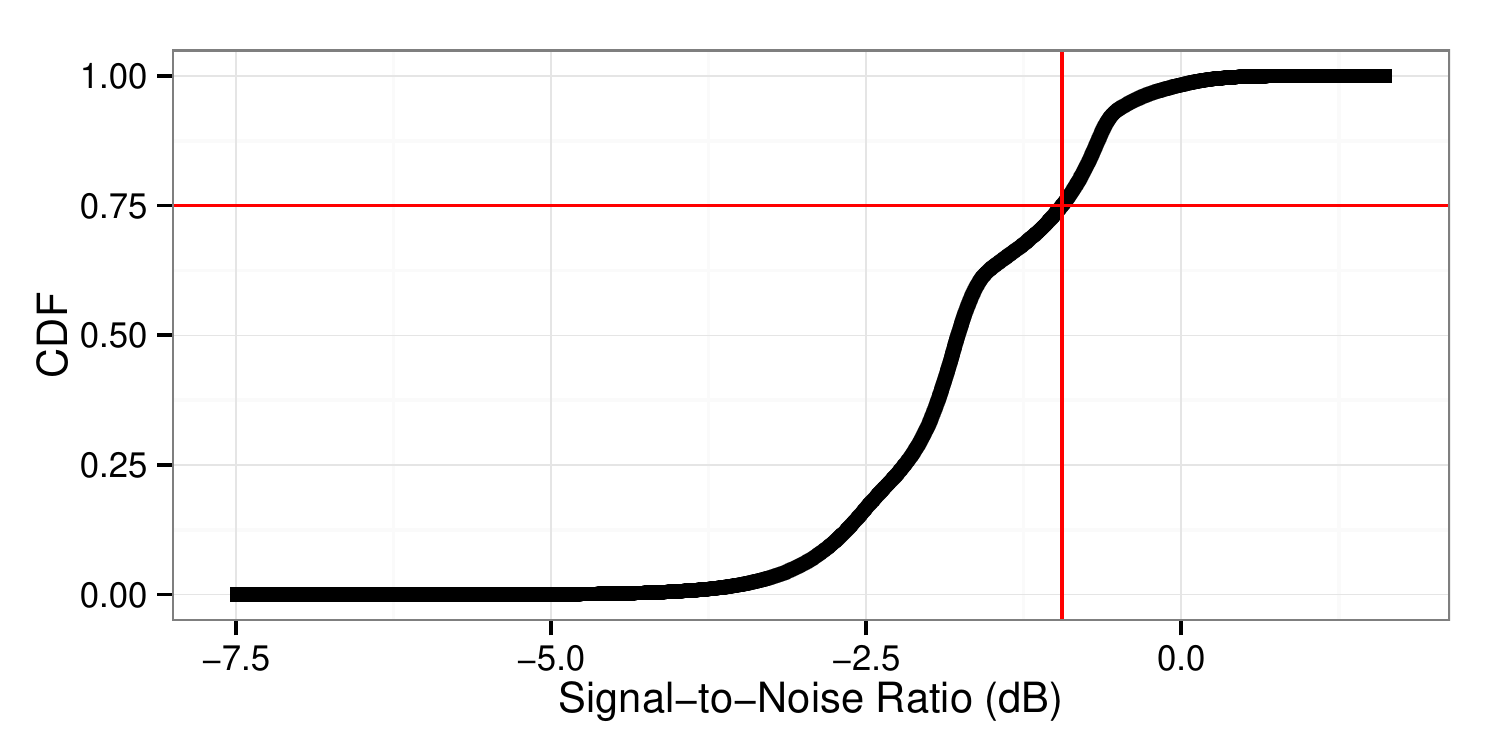}
\caption{Density and CDF of the Signal-to-Noise ratio over all weights in the
network. The red line denotes the 75\% cut-off.}
\label{fig:SNR}
\end{center}
\end{figure}

In Table~\ref{tab:snr}, we examine the effect of replacing the variational
posterior on some of the weights with a constant zero, so as
to determine the level of redundancy in the network found by Bayes by Backprop.
We took a Bayes by Backprop trained network with two layers of 1200
units\footnote{We used a network from the end of training rather
than picking a network with a low validation cost found during training, hence
the disparity with results in Table~\ref{tab:mnist}.  The lowest test error
observed was $1.12\%$.}
and ordered the weights by their signal-to-noise ratio ($|\mu_i|/\sigma_i$).
We removed the weights with the lowest signal to noise ratio.
As can be seen in Table~\ref{tab:snr}, even when $95\%$ of the weights are
removed the network still performs well, with a significant drop
in performance once $98\%$ of the weights have been removed.

In Figure~\ref{fig:SNR} we examined the distribution of the signal-to-noise
relative to the cut-off in the network uses in Table~\ref{tab:snr}.
The lower plot shows the cumulative distribution of signal-to-noise ratio, whilst
the top plot shows the density.
From the density plot we see there are two modalities of 
signal-to-noise ratios, and from the CDF we see that the 
75\% cut-off separates these two peaks.
These two peaks coincide with a drop in performance in Table~\ref{tab:snr}
from 1.24\% to 1.29\%, suggesting that the signal-to-noise heuristic
is in fact related to the test performance.

It is interesting to contrast this weight removal approach to obtaining a fast,
smaller, sparse network for prediction after training with the approach taken
by distillation \citep{hinton_distilling_2014} which requires an extra stage of
training to obtain a compressed prediction model.
\begin{table}
\caption{\label{tab:snr}Classification Errors after Weight pruning}
\centering
\begin{tabular}{l|l|r}
\textbf{Proportion removed} & \textbf{\# Weights} & \textbf{Test Error} \\
\hline
0\% & 2.4m &  $1.24\%$ \\
50\% & 1.2m & $1.24\%$ \\
75\% & 600k & $1.24\%$ \\
95\% & 120k & $1.29\%$ \\
98\% & 48k &  $1.39\%$ \\
\end{tabular}
\end{table}
As with distillation, our method begins with an ensemble (one for each
possible assignment of the weights).
However, unlike distillation, we can simply obtain a subset of this ensemble by
using the probabilistic properties of the weight distributions learnt to
gracefully prune the ensemble down into a smaller network.
Thus even though networks trained by Bayes by Backprop may have
twice as many weights, the number of parameters that actually
need to be stored at run time can be far fewer.
\citet{graves_practical_2011} also considered pruning weights using the signal
to noise ratio, but demonstrated results on a network $20$ times smaller and did
not prune as high a proportion of weights (at most $11\%$) whilst still
maintaining good test performance.
The scale mixture prior used by Bayes by Backprop encourages a broad spread of
the weights.
Many of these weights can be successfully pruned without impacting performance
significantly.

\subsection{Regression curves}

We generated training data from the curve:
\begin{align*}
y &= x + 0.3 \sin(2\pi (x + \epsilon)) + 0.3 \sin(4\pi (x + \epsilon)) + \epsilon
\end{align*}
where $\epsilon \sim \mathcal{N}(0, 0.02)$.
Figure \ref{fig:regress} shows two examples of fitting a neural network to these data,
minimising a conditional Gaussian loss.
Note that in the regions of the input space where there are no data, the
ordinary neural network reduces the variance to zero and chooses to fit a particular function,
even though there are many possible extrapolations of the training data.
\begin{figure}[bhtp]
\includegraphics[width=.49\linewidth]{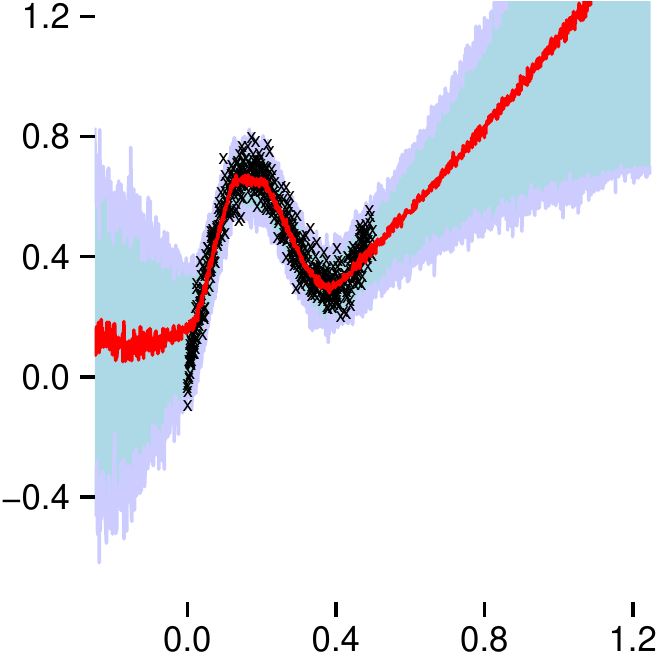}
\includegraphics[width=.49\linewidth]{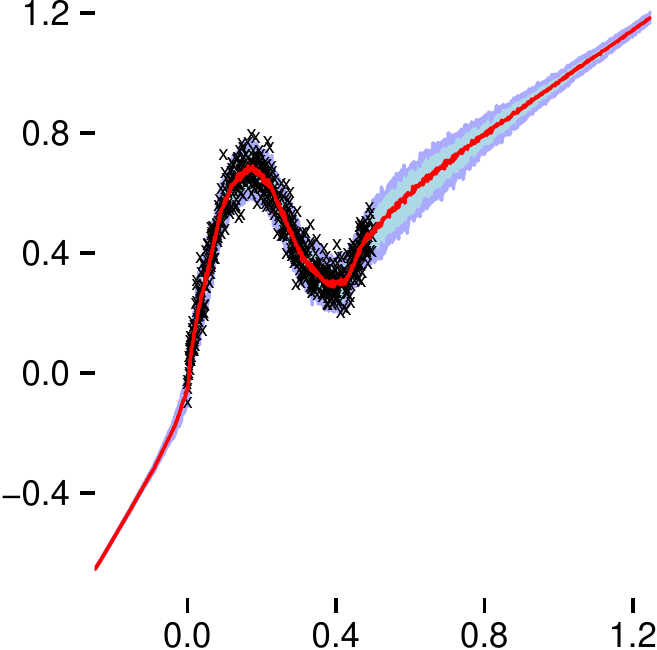}
\caption{\label{fig:regress}Regression of noisy data with interquatile ranges.
Black crosses are training samples. Red lines are median predictions. Blue/purple
region is interquartile range.
Left: Bayes by Backprop neural network, Right: standard neural network.}
\end{figure}
On the left, Bayesian model averaging affects predictions:
where there are no data, the confidence intervals diverge, reflecting there being many possible extrapolations.
In this case Bayes by Backprop prefers to be uncertain where there are no
nearby data, as opposed to a standard neural network which can be overly
confident.

\subsection{Bandits on Mushroom Task}
\label{sec:exper:bandits}
We take the UCI Mushrooms data set \citep{bache_uci_2013}, and cast it as a bandit task, similar to
\citet[Chapter 6]{guez_sample_2015}. 
Each mushroom has a set of features, which we treat as the context for the
bandit, and is labelled as edible or poisonous.
An agent can either eat or not eat a mushroom.
If an agent eats an edible mushroom, then it receives a reward of $5$.
If an agent eats a poisonous mushroom, then with probability $\frac{1}{2}$ it receives a reward of $-35$, otherwise
a reward of $5$.
If an agent elects not to eat a mushroom, it receives a reward of $0$.
Thus an agent expects to receive a reward of $5$ for eating an edible reward,
but an expected reward of $-15$ for eating a poisonous mushroom.

Regret measures the difference between the reward achievable by an oracle and the reward received by an agent.
In this case, an oracle will always receive a reward of $5$ for an edible mushroom, or $0$ for a poisonous mushroom.
We take the cumulative sum of regret of several agents and show them in Figure~\ref{fig:mushroom}.
\begin{figure}
\includegraphics[width=\linewidth]{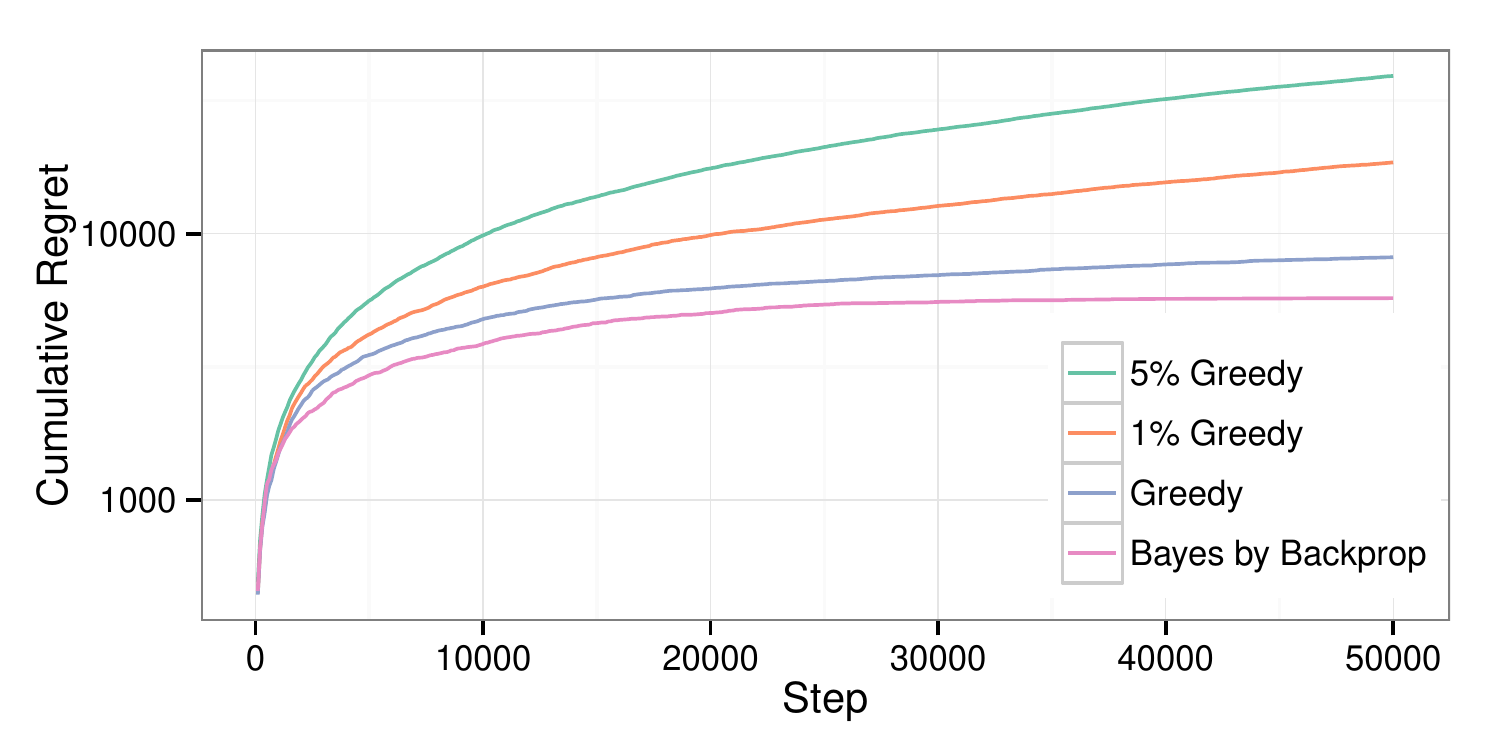}
\caption{\label{fig:mushroom}Comparison of  cumulative regret of various agents on the mushroom bandit task, averaged over five runs. Lower is better.}
\end{figure}
Each agent uses a neural network with two hidden layers of 100 rectified linear units.
The input to the network is a vector consisting of the mushroom features (context) and a one of $K$ encoding of the
action.
The output of the network is a single scalar, representing the expected reward of the given action in the given context.
For Bayes by Backprop, we sampled the weights twice and averaged two of these outputs to obtain the
expected reward for action selection.
We kept the last 4096 reward, context and action tuples in a buffer, and
trained the networks using randomly drawn minibatches of size 64 for 64
training steps ($64\times 64 = 4096$) per interaction with the Mushroom bandit.
A common heuristic for trading-off exploration vs. exploitation is to follow an
$\varepsilon$-greedy policy: with probability $\varepsilon$ propose a uniformly random action,
otherwise pick the best action according to the neural network.

Figure~\ref{fig:mushroom} compares a Bayes by Backprop agent with
three $\varepsilon$-greedy agents, for values of $\varepsilon$ of $0\%$ (pure
greedy), $1\%$, and $5\%$.
An $\varepsilon$ of $5\%$ appears to over-explore, whereas a purely greedy agent does poorly
at the beginning, greedily electing to eat nothing, but then does much better
once it has seen enough data.
It seems that non-local function approximation updates allow the greedy agent
to explore, as for the first $1,000$ steps, the agent eats nothing but after
approximately $1,000$ the greedy agent suddenly decides to eat mushrooms.
The Bayes by Backprop agent explores from the beginning, both eating and ignoring
mushrooms and quickly converges on eating and non-eating with an almost perfect
rate (hence the almost flat regret).

\section{Discussion}
\label{sec:discuss}

We introduced a new algorithm for learning neural networks with uncertainty on
the weights called Bayes by Backprop.
It optimises a well-defined objective function to learn a distribution on the
weights of a neural network.
The algorithm achieves good results in several domains.
When classifying MNIST digits, performance from Bayes by Backprop is comparable
to that of dropout.
We demonstrated on a simple non-linear regression problem that the uncertainty
introduced allows the network to make more reasonable predictions about unseen
data.
Finally, for contextual bandits, we showed how Bayes by Backprop can
automatically learn how to trade-off exploration and exploitation.
Since Bayes by Backprop simply uses gradient updates, it can readily be scaled
using multi-machine optimisation schemes such as asynchronous SGD \citep{dean_large_2012}.
Furthermore, all of the operations used are readily implemented on a GPU.

\paragraph{Acknowledgements}

The authors would like to thank
Ivo Danihelka,
Danilo Rezende,
Silvia Chiappa,
Alex Graves,
Remi Munos,
Ben Coppin,
Liam Clancy,
James Kirkpatrick,
Shakir Mohamed,
David Pfau, and
Theophane Weber
for useful discussions and comments.

\bibliographystyle{plainnat}
\bibliography{refs.bib}

\newcommand{\colt}[1]{Proceedings of the #1 Annual Conference On Learning
  Theory (COLT)}\newcommand{\nips}{Advances in Neural Information Processing
  Systems (NIPS)}\newcommand{\icml}[1]{Proceedings of the #1 International
  Conference on Machine Learning (ICML)}\newcommand{\aistats}[1]{Proceedings of
  the #1 International Conference on Artificial Intelligence and Statistics
  Learning (AISTATS)}\newcommand{\alt}[1]{Proceedings of the #1 Annual
  Conference on Algorithmic Learning Theory
  (ALT)}\newcommand{\icdar}[1]{Proceedings of the #1 International Conference
  on Document Analysis and Recognition
  (ICDAR)}\newcommand{\iclr}[1]{Proceedings of the #1 International Conference
  on Learning Representations (ICLR)}\newcommand{\uai}[1]{Proceedings of the #1
  Conference on Uncertainty in Artificial Intelligence (UAI)}
\begin{thebibliography}{43}
\providecommand{\natexlab}[1]{#1}
\providecommand{\url}[1]{\texttt{#1}}
\expandafter\ifx\csname urlstyle\endcsname\relax
  \providecommand{\doi}[1]{doi: #1}\else
  \providecommand{\doi}{doi: \begingroup \urlstyle{rm}\Url}\fi

\bibitem[Agrawal and Goyal(2012)]{agrawal_analysis_2012}
Shipra Agrawal and Navin Goyal.
\newblock Analysis of {T}hompson sampling for the multi-armed bandit problem.
\newblock In \emph{\colt{25th}}, volume~23, pages 39.1--39.26, 2012.

\bibitem[Agrawal and Goyal(2013)]{agrawal_further_2013}
Shipra Agrawal and Navin Goyal.
\newblock Further optimal regret bounds for {T}hompson sampling.
\newblock In \emph{\aistats{16th}}, pages 99--107, 2013.

\bibitem[Bache and Lichman(2013)]{bache_uci_2013}
Kevin Bache and Moshe Lichman.
\newblock \emph{{UCI} Machine Learning Repository}.
\newblock University of California, Irvine, School of Information and Computer
  Sciences, 2013.
\newblock URL \url{http://archive.ics.uci.edu/ml}.

\bibitem[Bishop(2006)]{bishop_section_2006}
Christopher~M Bishop.
\newblock Section 10.1: variational inference.
\newblock In \emph{Pattern Recognition and Machine Learning}. Springer, 2006.
\newblock ISBN 9780387310732.

\bibitem[Buntine and Weigend(1991)]{buntine_bayesian_1991}
Wray~L Buntine and Andreas~S Weigend.
\newblock {B}ayesian back-propagation.
\newblock \emph{Complex systems}, 5\penalty0 (6):\penalty0 603--643, 1991.

\bibitem[Chapelle and Li(2011)]{chapelle_empirical_2011}
Olivier Chapelle and Lihong Li.
\newblock An empirical evaluation of {T}hompson sampling.
\newblock In \emph{\nips{}}, pages 2249--2257, 2011.

\bibitem[Chipman(1996)]{chipman1996bayesian}
Hugh Chipman.
\newblock {B}ayesian variable selection with related predictors.
\newblock \emph{Canadian Journal of Statistics}, 24\penalty0 (1):\penalty0
  17--36, 1996.

\bibitem[Dean et~al.(2012)Dean, Corrado, Monga, Chen, Devin, Mao, Senior,
  Tucker, Yang, Le, et~al.]{dean_large_2012}
Jeffrey Dean, Greg Corrado, Rajat Monga, Kai Chen, Matthieu Devin, Mark Mao,
  Andrew Senior, Paul Tucker, Ke~Yang, Quoc~V Le, et~al.
\newblock Large scale distributed deep networks.
\newblock In \emph{\nips{}}, pages 1223--1231, 2012.

\bibitem[Filippi et~al.(2010)Filippi, Cappe, Garivier, and
  Szepesvári]{filippi_parametric_2010}
Sarah Filippi, Olivier Cappe, Aurélien Garivier, and Csaba Szepesvári.
\newblock Parametric bandits: The generalized linear case.
\newblock In \emph{Advances in Neural Information Processing Systems}, pages
  586--594, 2010.

\bibitem[Friston et~al.(2007)Friston, Mattout, Trujillo-Barreto, Ashburner, and
  Penny]{friston_variational_2007}
Karl Friston, J\'er\'emie Mattout, Nelson Trujillo-Barreto, John Ashburner, and
  Will Penny.
\newblock Variational free energy and the {L}aplace approximation.
\newblock \emph{Neuroimage}, 34\penalty0 (1):\penalty0 220--234, 2007.

\bibitem[Gelman(2008)]{gelman_objections_2008}
Andrew Gelman.
\newblock Objections to {B}ayesian statistics.
\newblock \emph{Bayesian Analysis}, 3:\penalty0 445--450, 2008.
\newblock ISSN 1931-6690.
\newblock \doi{11.1214/08-BA318}.

\bibitem[George and McCulloch(1993)]{george1993variable}
Edward~I George and Robert~E McCulloch.
\newblock Variable selection via gibbs sampling.
\newblock \emph{Journal of the American Statistical Association}, 88\penalty0
  (423):\penalty0 881--889, 1993.

\bibitem[Glorot et~al.(2011)Glorot, Bordes, and Bengio]{glorot_deep_2011}
Xavier Glorot, Antoine Bordes, and Yoshua Bengio.
\newblock Deep sparse rectifier networks.
\newblock In \emph{\aistats{14th}}, volume~15, pages 315--323, 2011.

\bibitem[Graves(2011)]{graves_practical_2011}
Alex Graves.
\newblock Practical variational inference for neural networks.
\newblock In \emph{\nips{}}, pages 2348--2356, 2011.

\bibitem[Gregor et~al.(2014)Gregor, Danihelka, Mnih, Blundell, and
  Wierstra]{gregor_deep_2014}
Karol Gregor, Ivo Danihelka, Andriy Mnih, Charles Blundell, and Daan Wierstra.
\newblock Deep {AutoRegressive} networks.
\newblock In \emph{\icml{31st}}, pages 1242--1250, 2014.

\bibitem[Guez(2015)]{guez_sample_2015}
Arthur Guez.
\newblock \emph{{Sample-Based Search Methods For {B}ayes-Adaptive Planning}}.
\newblock PhD thesis, University College London, 2015.

\bibitem[Hinton et~al.(2014)Hinton, Vinyals, and Dean]{hinton_distilling_2014}
Geoffrey Hinton, Oriol Vinyals, and Jeff Dean.
\newblock Distilling the knowledge in a neural network.
\newblock In \emph{NIPS 2014 Deep Learning and Representation Learning
  Workshop}, 2014.

\bibitem[Hinton and Van~Camp(1993)]{hinton_keeping_1993}
Geoffrey~E Hinton and Drew Van~Camp.
\newblock Keeping the neural networks simple by minimizing the description
  length of the weights.
\newblock In \emph{\colt{16th}}, pages 5--13. {ACM}, 1993.

\bibitem[Hinton et~al.(2012)Hinton, Srivastava, Krizhevsky, Sutskever, and
  Salakhutdinov]{hinton_dropout_2012}
Geoffrey~E. Hinton, Nitish Srivastava, Alex Krizhevsky, Ilya Sutskever, and
  Ruslan~R. Salakhutdinov.
\newblock Improving neural networks by preventing co-adaptation of feature
  detectors.
\newblock \emph{{arXiv}:1207.0580}, July 2012.

\bibitem[Jaakkola and Jordan(2000)]{jaakkola_bayesian_2000}
Tommi~S. Jaakkola and Michael~I. Jordan.
\newblock {B}ayesian parameter estimation via variational methods.
\newblock \emph{Statistics and Computing}, 10\penalty0 (1):\penalty0 25--37,
  2000.

\bibitem[Kaufmann et~al.(2012)Kaufmann, Korda, and
  Munos]{kaufmann_thompson_2012}
Emilie Kaufmann, Nathaniel Korda, and R\'emi Munos.
\newblock {T}hompson sampling: An asymptotically optimal finite-time analysis.
\newblock In \emph{\alt{23rd}}, pages 199--213. Springer, 2012.

\bibitem[Kingma and Welling(2014)]{kingma_autoencoding_2014}
Diederik~P. Kingma and Max Welling.
\newblock Auto-encoding variational {B}ayes.
\newblock In \emph{\iclr{2nd}}, 2014.
\newblock {arXiv}: 1312.6114.

\bibitem[LeCun(1985)]{lecun_procedure_1985}
Yann LeCun.
\newblock Une proc\'edure d'apprentissage pour r\'eseau \`a seuil asymmetrique
  (a learning scheme for asymmetric threshold networks).
\newblock In \emph{Proceedings of Cognitiva 85, Paris, France}, pages 599--604,
  1985.

\bibitem[LeCun and Cortes(1998)]{lecun_mnist_1998}
Yann LeCun and Corinna Cortes.
\newblock \emph{The {MNIST} database of handwritten digits}.
\newblock 1998.
\newblock URL \url{http://yann.lecun.com/exdb/mnist/}.

\bibitem[Li et~al.(2010)Li, Chu, Langford, and
  Schapire]{li_contextual_bandit_2010}
Lihong Li, Wei Chu, John Langford, and Robert~E. Schapire.
\newblock A contextual-bandit approach to personalized news article
  recommendation.
\newblock In \emph{Proceedings of the 19th International Conference on World
  Wide Web}, {WWW} '10, pages 661--670, New York, {NY}, {USA}, 2010. {ACM}.
\newblock ISBN 978-1-60558-799-8.
\newblock \doi{10.1145/1772690.1772758}.

\bibitem[MacKay(1992)]{mackay_practical_1992}
David~JC MacKay.
\newblock A practical {B}ayesian framework for backpropagation networks.
\newblock \emph{Neural computation}, 4\penalty0 (3):\penalty0 448--472, 1992.

\bibitem[MacKay(1995)]{mackay_probable_1995}
David~JC MacKay.
\newblock Probable networks and plausible predictions-a review of practical
  {B}ayesian methods for supervised neural networks.
\newblock \emph{Network: Computation in Neural Systems}, 6\penalty0
  (3):\penalty0 469--505, 1995.

\bibitem[May et~al.(2012)May, Korda, Lee, and Leslie]{may_optimistic_2012}
Benedict~C May, Nathan Korda, Anthony Lee, and David~S. Leslie.
\newblock Optimistic {B}ayesian sampling in contextual-bandit problems.
\newblock \emph{The Journal of Machine Learning Research}, 13\penalty0
  (1):\penalty0 2069--2106, 2012.

\bibitem[Minka(2001)]{minka_family_2001}
Thomas~P Minka.
\newblock \emph{A family of algorithms for approximate {B}ayesian inference}.
\newblock PhD thesis, Massachusetts Institute of Technology, 2001.

\bibitem[Minka(2005)]{minka_divergence_2005}
Thomas~P Minka.
\newblock Divergence measures and message passing.
\newblock Technical report, Microsoft Research, 2005.

\bibitem[Mitchell and Beauchamp(1988)]{mitchell1988bayesian}
Toby~J Mitchell and John~J Beauchamp.
\newblock {B}ayesian variable selection in linear regression.
\newblock \emph{Journal of the American Statistical Association}, 83\penalty0
  (404):\penalty0 1023--1032, 1988.

\bibitem[Nair and Hinton(2010)]{nair_rectified_2010}
Vinod Nair and Geoffrey~E Hinton.
\newblock Rectified linear units improve restricted {B}oltzmann machines.
\newblock In \emph{\icml{27th}}, pages 807--814, 2010.

\bibitem[Neal and Hinton(1998)]{neal_view_1998}
Radford~M Neal and Geoffrey~E Hinton.
\newblock A view of the {EM} algorithm that justifies incremental, sparse, and
  other variants.
\newblock In \emph{Learning in graphical models}, pages 355--368. Springer,
  1998.

\bibitem[Opper and Archambeau(2009)]{opper_variational_2009}
Manfred Opper and C\'edric Archambeau.
\newblock The variational {G}aussian approximation revisited.
\newblock \emph{Neural computation}, 21\penalty0 (3):\penalty0 786--792, 2009.

\bibitem[Owen(2013)]{mcbook}
Art~B. Owen.
\newblock \emph{Monte Carlo theory, methods and examples}.
\newblock 2013.

\bibitem[Rezende et~al.(2014)Rezende, Mohamed, and
  Wierstra]{rezende_stochastic_2014}
Danilo~Jimenez Rezende, Shakir Mohamed, and Daan Wierstra.
\newblock Stochastic backpropagation and approximate inference in deep
  generative models.
\newblock In \emph{\icml{31st}}, pages 1278--1286, 2014.

\bibitem[Rumelhart et~al.(1988)Rumelhart, Hinton, and
  Williams]{rumelhart1988learning}
David~E Rumelhart, Geoffrey~E Hinton, and Ronald~J Williams.
\newblock Learning representations by back-propagating errors.
\newblock \emph{Cognitive modeling}, 5, 1988.

\bibitem[Saul et~al.(1996)Saul, Jaakkola, and Jordan]{saul_mean_1996}
Lawrence~K Saul, Tommi Jaakkola, and Michael~I Jordan.
\newblock Mean field theory for sigmoid belief networks.
\newblock \emph{Journal of artificial intelligence research}, 4\penalty0
  (1):\penalty0 61--76, 1996.

\bibitem[Simard et~al.(2003)Simard, Steinkraus, and Platt]{simard_best_2003}
Patrice~Y Simard, Dave Steinkraus, and John~C Platt.
\newblock Best practices for convolutional neural networks applied to visual
  document analysis.
\newblock In \emph{\icdar{12th}}, volume~2, pages 958--958. {IEEE} Computer
  Society, 2003.

\bibitem[Thompson(1933)]{thompson_likelihood_1933}
William~R Thompson.
\newblock On the likelihood that one unknown probability exceeds another in
  view of the evidence of two samples.
\newblock \emph{Biometrika}, pages 285--294, 1933.

\bibitem[Titsias and L{\'a}zaro-Gredilla(2014)]{titsias2014doubly}
Michalis Titsias and Miguel L{\'a}zaro-Gredilla.
\newblock Doubly stochastic variational bayes for non-conjugate inference.
\newblock In \emph{Proceedings of the 31st International Conference on Machine
  Learning (ICML-14)}, pages 1971--1979, 2014.

\bibitem[Wan et~al.(2013)Wan, Zeiler, Zhang, Cun, and
  Fergus]{wan2013regularization}
Li~Wan, Matthew Zeiler, Sixin Zhang, Yann~L Cun, and Rob Fergus.
\newblock Regularization of neural networks using dropconnect.
\newblock In \emph{Proceedings of the 30th International Conference on Machine
  Learning (ICML-13)}, pages 1058--1066, 2013.

\bibitem[Yedidia et~al.(2000)Yedidia, Freeman, and
  Weiss]{yedidia_generalized_2000}
Jonathan~S Yedidia, William~T Freeman, and Yair Weiss.
\newblock Generalized belief propagation.
\newblock In \emph{\nips{}}, volume~13, pages 689--695, 2000.

\end{thebibliography}

\end{document}